\title{Entropy Augmented Reinforcement Learning}
\author{
    Jianfei Ma
}
\newtheorem{theorem}{Theorem}
\newtheorem{proposition}{Proposition}
\newtheorem{lemma}{Lemma}
\newtheorem{definition}{Definition}
\begin{document}
\maketitle

\begin{abstract}
  Deep reinforcement learning was instigated with the presence of trust region methods, being scalable and efficient. However, the pessimism of such algorithms, among which it forces to constrain in a trust region by all means, has been proven to suppress the exploration and harm the performance. Exploratory algorithm such as SAC, while utilizes the entropy to encourage exploration, implicitly optimizing another objective yet. We first observed this inconsistency, and therefore put forward an analogous augmentation technique, which combines well with the on-policy algorithms, when a value critic is involved. Surprisingly, the proposed method consistently satisfies the soft policy improvement theorem, while being more extensible. As the analysis advises, it is crucial to control the temperature coefficient to balance the exploration and exploitation. Empirical tests on MuJoCo benchmark tasks show that the agent is heartened towards higher reward regions, and enjoys a finer performance. Furthermore, we verify the exploration bonus of our method on a set of custom environments.
\end{abstract}

\section{Introduction}

Deep model-free reinforcement learning (RL) algorithms have been applied to a rich array of challenging domains, such as video games \cite{DBLP:journals/nature/MnihKSRVBGRFOPB15} and robot locomotion \cite{DBLP:conf/icml/SchulmanLAJM15}. Policy-based method is one of the most appealing ways to solve those tasks, which can be directly parameterized by neural networks. However, due to high nonlinearity of policy representation and poor sample efficiency, it is hard to obtain favorable performance with standard policy gradient methods. Such dilemma has been posed previously \cite{DBLP:journals/ftrob/DeisenrothNP13}, which had become a major challenge for efficient policy search, robust algorithm design and complex policy learning.

TRPO and PPO are two representative solution methods aiming to overcome those challenges. Both of them successfully improves the sample efficiency and adapts to large-scale problems. Stemming from a same concept, they are yearning to keep the difference between adjacent policies small, so as to squeeze out the available data at hand. To arrive at that point, TRPO \cite{DBLP:conf/icml/SchulmanLAJM15} performs on a constraint optimization problem, by enforcing the KL divergence, while its successor -- PPO \cite{DBLP:journals/corr/SchulmanWDRK17} simplifies the procedure by clipping importance sampling ratio to exclude all suspicious gradients. However, the sacrifice would be lack of exploration when imposed with hard constraints. Determining the radius of the trust region is arduous, while it may evolve across learning. And \cite{DBLP:journals/corr/abs-2009-10897} found that PPO tends to follow suboptimal actions. This urges a refinement that encourages the agent to explore more while sustain the certain constraint for sample reuse.

A natural approach would be reward shaping with an augmenting entropy. This idea was studied concurrently in \cite{DBLP:conf/nips/NachumNXS17} and \cite{DBLP:conf/icml/HaarnojaTAL17}, recently advanced by \cite{DBLP:conf/icml/HaarnojaZAL18} with an actor-critic algorithm -- SAC. It is exploratory by grace of the additional entropy bonus, and sample-efficient with off-policy data fully engaged. However, there is an inconsistency between the augmented objective and the evaluating function (specifically, the action-value function) that actually follows to optimize. This unawareness carries even on the newer version \cite{DBLP:journals/corr/abs-1812-05905} with automated entropy adjustment. We argue that SAC does not follow the objective proposed whereas optimizes another objective. We therefore propose an alternative Bellman operator that is conjugate with that of SAC, which satisfies the soft policy improvement theorem as well, without modification. In addition, it combines well with the on-policy algorithms, compatible with the variance reduction mechanism GAE \cite{DBLP:journals/corr/SchulmanMLJA15}, while its counterpart fails. 

In this paper, we come up with a general entropy augmentation method, which boosts the exploration capability of the agent for both on- and off-policy algorithms. We first point out the inconsistency of SAC. Then we formally deliver our method, with analysis and comparison provided. Lastly, we present empirical results that show our method can substantially improve the performance and encourage the exploration.
  
\section{Preliminaries}
We consider an infinite-horizon discounted MDP, which models how the agent interacts with the environment. It can be defined by a tuple $\mathcal{M} = (\mathcal{S}, \mathcal{A}, P, r, \rho_{0}, \gamma)$, in which $\mathcal{S}$ is the state space, $\mathcal{A}$ is the action space, $p: \mathcal{S} \times \mathcal{A} \times \mathcal{S} \rightarrow \mathbb{R}$ is the transition probability distribution, $r: \mathcal{S} \times \mathcal{A} \rightarrow \mathbb{R}$ is the reward function, $\rho_{0}: \mathcal{S} \rightarrow \mathbb{R}$ is the distribution of the initial state $s_{0}$, and $\gamma \in [0, 1)$ is the discount factor. We denote $\pi : \mathcal{S} \times \mathcal{A} \rightarrow [0, 1]$ as a stochastic policy, usually parameterized as $\pi_{\theta}$. As it evolves, we can collect a trajectory $\tau = (s_{0}, a_{0}, s_{1}, a_{1}, \dots)$, upon which the agent yearns to maximize the expected discounted reward $\eta (\pi_{\theta}) = \mathbb{E}_{\tau}\left[\sum\limits_{t=0}^{\infty}\gamma^{t}r_{t}  \right]$. We also define the unnormalized discounted state visitation distribution (with abuse of notation) as $\rho_{\pi} = \sum\limits_{t=0}^{\infty} \gamma^{t} P(s_{t} = s | \rho_{0}, \pi)$. Sometimes we would abbreviate $\pi_{\theta}$ as $\pi$ for simplicity's sake. Throughout the paper, we denote the entropy of $\pi(\cdot|s)$ as $\mathcal{H}(s)$.

\subsection{Performance Difference Lemma}
The performance-difference lemma \cite{DBLP:journals/mor/BurnetasK97} \cite{DBLP:conf/icml/KakadeL02} \cite{DBLP:conf/icml/SchulmanLAJM15} states that given two policies $\tilde{\pi}$ and $\pi$
\begin{equation}
  \label{eq:14}
  \eta (\tilde{\pi}) = \eta (\pi) + \mathbb{E}_{s \sim \rho_{\tilde{\pi}}, a \sim \tilde{\pi}} [A^{\pi}(s, a)]
\end{equation}
where $A^{\pi}(s, a) = Q^{\pi}(s, a) - V^{\pi}(s)$ is the advantage function. This is useful since it bridges any two policies by the advantage of one side.

It matches to the first order as $\tilde{\pi} = \pi$, therefore generalizes standard policy gradient theorem \cite{DBLP:conf/nips/SuttonMSM99}. And it is echoed with trust region methods as well as long as one replaces $\rho_{\tilde{\pi}}$ with $\rho_{\pi}$, irrespective of the state distribution mismatch, with the compensation of keep the dissimilarity between the adjacent policies small.

\subsection{Generalized Advantage Estimation}
\label{sec:gener-advant-estim}
GAE \cite{DBLP:journals/corr/SchulmanMLJA15} is an effective variance reduction scheme for policy gradient methods, which telescopes the n-step rolling advantage $\hat{A}_{t}^{n} = \sum\limits_{l=0}^{n-1} \gamma^{l}r(s_{t + l}, a_{t + l}) + \gamma^{n}V(s_{t + n}) - V(s_{t})$ by exponential average
\begin{equation}
  \label{eq:15}
  \begin{aligned}
  \hat{A}_{t}^{\text{GAE}(\gamma, \lambda)} & = (1 - \lambda)\sum\limits_{n=1}^{\infty}\lambda^{n}\hat{A}_{t}^{n}\\
  & = \sum\limits_{l = 0}^{\infty}(\gamma \lambda)^{l} \delta_{t + l}    
  \end{aligned}
\end{equation}
where $\delta_{t} = r(s_{t}, a_{t}) + \gamma V(s_{t + 1}) - V(s_{t})$. Controlling this additional parameter $\lambda$ properly can substantially reduce the variance of policy gradient estimate.

\subsection{Soft Bellman Operator}
In SAC \cite{DBLP:conf/icml/HaarnojaZAL18}, it defines a modified Bellman backup operator $\mathcal{T}^{\pi}$
\begin{equation}
  \label{eq:16}
  \mathcal{T}^{\pi} Q(s_{t}, a_{t}) = r(s_{t}, a_{t}) + \gamma \mathbb{E}_{s_{t + 1}}[V(s_{t + 1})],
\end{equation}
where
\begin{equation}
  \label{eq:17}
  V(s_{t}) = \mathbb{E}_{a_{t} \sim \pi}[Q(s_{t}, a_{t}) - \log{\pi(a_{t} | s_{t})}]
\end{equation}
With repeatedly applying this operator over the action-value function $Q$, it approaches to the optimal soft value function, and optimal policy is induced consequently.

Based on this appraisal, it proposes to minimize the following projection loss for any $s \in \mathcal{S}$ over the softmax policy class $\Pi$
\begin{equation}
  \label{eq:prj}
  \mathcal{L}(\pi' | s) = D_{\text{KL}}\Biggl(\pi'(\cdot | s) \| \frac{\exp{(\frac{1}{\alpha}Q^{\pi}(s, \cdot))}}{Z^{\pi} (s)}\Biggr)
\end{equation}

On top of that, it is stated that it optimizes the maximum entropy objective
\begin{equation}
  \label{eq:18}
  J(\pi) = \mathbb{E}_{\tau}\left[\sum\limits_{t=0}^{\infty}\gamma^{t}\bigl(r(s_{t}, a_{t}) + \alpha \mathcal{H}(s_{t})\bigr)  \right]
\end{equation}

While this elaborate connection is seemingly consistent, it would be wrong that it doesn't optimize this objective but follow another one implicitly, as later that will be discussed.

\section{Optimization Inconsistency }
\label{sec:optim-incons-}
\begin{definition}
  The reward shaping is absorbing if there exists a real-valued function $f: \mathcal{S} \times \mathcal{A} \mapsto \mathbb{R}$ such that for $\forall t$, the reward function can be reformulated as
  \begin{equation}
    \label{eq:19}
    \hat{r}(s_{t}, a_{t}) = r(s_{t}, a_{t}) + f(s_{t}, a_{t})
  \end{equation}
\end{definition}

With this definition, we have the following statement

\begin{proposition}
  Soft Bellman backup operator is not compatible with GAE \ref{sec:gener-advant-estim}.
\end{proposition}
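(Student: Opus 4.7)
The plan is to identify the central property of GAE that the soft Bellman operator violates---namely the zero-mean property of its one-step TD residual---and to show that this failure cannot be repaired by any function $f$ satisfying the absorbing condition of Definition~1. First I would unroll the soft Bellman fixed point starting from $(s_{t}, a_{t})$: repeated application of $\mathcal{T}^{\pi}$ shows that the entropy bonus first appears at the successor state, so that $Q^{\text{soft}}(s_{t}, a_{t})$ collects entropy terms only from $l \geq 1$ while $V^{\text{soft}}(s_{t})$ also collects the $l = 0$ contribution. The conceptual point I want to surface at this stage is that in SAC the entropy augmentation is attached to $V$ rather than to the instantaneous reward at $(s_{t}, a_{t})$, in contrast to the additive form required by Definition~1.

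Next I would compute the TD residual $\delta_{t} = r(s_{t}, a_{t}) + \gamma V^{\text{soft}}(s_{t+1}) - V^{\text{soft}}(s_{t})$ that feeds the GAE sum in Eq.~\ref{eq:15}. The soft Bellman equation collapses the $s_{t+1}$-expectation to $Q^{\text{soft}}(s_{t}, a_{t}) - V^{\text{soft}}(s_{t})$, and one more expectation over $a_{t} \sim \pi$ then yields
\begin{equation*}
  \mathbb{E}_{a_{t}, s_{t+1}}[\delta_{t}] = \mathbb{E}_{a}[Q^{\text{soft}}(s_{t}, a)] - V^{\text{soft}}(s_{t}) = -\mathcal{H}(s_{t}),
\end{equation*}
whereas GAE's telescoping interpretation implicitly requires this quantity to vanish. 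Propagating the bias through the $(\gamma\lambda)$-geometric weights produces a systematic $-\sum_{l \geq 0}(\gamma\lambda)^{l} \mathcal{H}(s_{t+l})$ offset in the advantage estimate; in particular, at $\lambda = 1$ the estimator reduces to $\sum_{l} \gamma^{l} r_{t+l} - V^{\text{soft}}(s_{t})$, which awkwardly pairs the unaugmented Monte-Carlo return against the entropy-augmented value. The two endpoints of the $\lambda$ bias/variance trade-off therefore no longer target a single consistent advantage, which is precisely what GAE is designed to interpolate between.

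I would conclude by contrasting with the absorbing form of Definition~1: if $\hat{r}(s_{t}, a_{t}) = r(s_{t}, a_{t}) + f(s_{t}, a_{t})$ is used with the ordinary Bellman operator, then $V^{\hat{r}}(s) = \mathbb{E}_{a}[Q^{\hat{r}}(s, a)]$ holds by construction and the resulting TD residual has zero mean, so Eq.~\ref{eq:15} works as designed. The soft operator cannot be recast in this form because its entropy contribution appears as $\gamma \mathcal{H}(s_{t+1})$ \emph{inside} $V^{\text{soft}}(s_{t+1})$, and the persistent gap $V^{\text{soft}}(s_{t}) - \mathbb{E}_{a}[Q^{\text{soft}}(s_{t}, a)] = \mathcal{H}(s_{t})$ cannot be removed by any additive modification of $r(s_{t}, a_{t})$ alone. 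The hard part of the argument, in my view, is pinning down what ``compatible with GAE'' should formally mean; my plan is to equate it with the zero-mean property of the one-step TD residual, since that is precisely what makes the $(\gamma \lambda)$ telescoping in Eq.~\ref{eq:15} valid and consistent across all values of $\lambda$.
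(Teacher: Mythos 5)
Your proof is correct, but it reaches the conclusion by a different route than the paper. The paper argues entirely at the level of Definition~1: expanding $\mathcal{T}^{\pi}Q$ one step, the augmenting term $-\alpha\log\pi_{t+1}$ cannot be attached to a reward at a single time index (it would have to merge with either $r_{t}$ or $r_{t+1}$, and either choice breaks the $Q$/$V$ bookkeeping), so the shaping is not absorbing and the derivation of Eq.~\ref{eq:15} simply does not go through. You instead make the failure quantitative: you formalize ``compatible with GAE'' as the zero-conditional-mean property of the residual $\delta_{t}$, compute that under the soft fixed point $\mathbb{E}_{a_{t},s_{t+1}}[\delta_{t}] = \mathbb{E}_{a}[Q^{\text{soft}}] - V^{\text{soft}} = -\mathcal{H}(s_{t})$ (this matches Eq.~\ref{eq:17} as written; with an explicit temperature it is $-\alpha\mathcal{H}(s_{t})$), and trace the resulting $\lambda$-dependent bias, including the clean observation that at $\lambda=1$ the estimator pairs the unaugmented Monte-Carlo return against the entropy-augmented baseline. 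Your closing argument---that the gap $V^{\text{soft}}(s_{t})-\mathbb{E}_{a}[Q^{\text{soft}}(s_{t},a)]=\mathcal{H}(s_{t})$ cannot be removed by any additive modification of $r(s_{t},a_{t})$---is the paper's ``not absorbing'' claim in different clothes, so the two proofs meet at the same structural obstruction. What the paper's version buys is brevity and a direct tie to Definition~1, which is reused in the next proposition; what yours buys is an explicit account of \emph{how} GAE breaks (a systematic $-\sum_{l}(\gamma\lambda)^{l}\mathcal{H}(s_{t+l})$ offset and inconsistent targets across $\lambda$), which makes the informal word ``incompatible'' precise and is arguably the more convincing argument.
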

\begin{proof}
  It can be noted that $\mathcal{T}^{\pi}Q$ is not absorbing w.r.t. $V$, by taking a look at 1-step expansion
  \begin{equation*}
    \label{eq:20}
    \mathcal{T}^{\pi}Q_{t} = r_{t} + \gamma \mathbb{E}_{p, \pi}[r_{t + 1} + \gamma \mathbb{E}_{p} [V_{t + 2}] - \alpha \log{\pi_{t + 1}}]
  \end{equation*}
  The augmenting term $- \alpha \log{\pi_{t + 1}}$ can only associate with either $r_{t}$ or $r_{t + 1}$, therefore it breaks the tie, consequently not absorbing.
  
  Immediately it fails to derive Equation \ref{eq:15}, so that the statement follows.
\end{proof}

This disconnection further indicates that
\begin{proposition}
  SAC does not optimize the soft objective w.r.t. Equation \ref{eq:18}.
  \begin{proof}
    By unrolling the soft Bellman backup operator when the true value $Q^{\pi}$ is attained
    \begin{equation*}
      \mathcal{T}^{\pi}Q^{\pi}_{t}  = \mathbb{E}_{s_{t+1}, a_{t+1}, \dots}\Biggl[\sum\limits_{l=1}^{\infty} \gamma^{l}\bigl(r_{t + l} +  \alpha \mathcal{H}_{t + l} \bigr) + r_{t} \Biggr]
    \end{equation*}
    it is clear that it is not absorbing at the first time step, whereas the soft objective is consistently absorbing for all time steps, thus it contradicts.
  \end{proof}
\end{proposition}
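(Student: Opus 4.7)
The plan is to leverage Proposition 1 together with Definition 1: since $\mathcal{T}^{\pi}$ fails the absorbing-shaping test while the soft objective in Equation \ref{eq:18} is manifestly produced by an absorbing shaping, the fixed point of $\mathcal{T}^{\pi}$ cannot be the $Q$-function induced by that objective, and therefore the projection update of Equation \ref{eq:prj} that targets $\exp(Q^{\pi}/\alpha)$ cannot be optimizing $J$.

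First I would compute the fixed point $Q^{\pi}$ of $\mathcal{T}^{\pi}$ explicitly by repeatedly substituting the soft state value $V(s) = \mathbb{E}_{a \sim \pi}[Q(s,a) - \alpha \log \pi(a|s)]$ into the one-step identity $Q(s_t,a_t) = r(s_t,a_t) + \gamma \mathbb{E}_{s_{t+1}}[V(s_{t+1})]$. Taking the action expectation converts $-\alpha \log \pi$ into $\alpha \mathcal{H}(\cdot)$, and iterating yields
\begin{equation*}
Q^{\pi}(s_t, a_t) = r(s_t, a_t) + \mathbb{E}\Biggl[\sum_{l=1}^{\infty} \gamma^{l} \bigl(r_{t+l} + \alpha \mathcal{H}(s_{t+l})\bigr)\Biggr].
\end{equation*}
From this I would read off the implicit per-step shaping: it equals $0$ at index $l=0$ and $\alpha \mathcal{H}$ for every $l \geq 1$. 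Such a time-indexed shaping is not absorbing in the sense of Definition 1. In contrast, the return defined by Equation \ref{eq:18} is generated by the uniform shaping $f(s,a) \equiv \alpha \mathcal{H}(s)$ applied at every time step. Because a discounted trajectory return is pinned down by its per-step shaping, together with the transition kernel and $\rho_{0}$, the two objectives cannot coincide; the defect is precisely the missing $\alpha \mathcal{H}(s_t)$ at the anchor step.

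The main obstacle I anticipate is bridging the gap between ``$Q^{\pi}$ does not represent $J$'' and ``SAC does not optimize $J$'', since $J$ is an expectation over whole trajectories but $Q^{\pi}$ is conditioned on the anchor pair $(s_t,a_t)$, so one could worry that the missing entropy term is an inconsequential state-dependent constant. I would close this gap by propagating the offset through the projection loop in Equation \ref{eq:prj}: that update maximises $\mathbb{E}_{\pi'}[Q^{\pi}(s,\cdot)] + \alpha \mathcal{H}_{\pi'}(s)$ pointwise at every visited $s$, not only at the initial state, so the stationary point of the induced policy iteration is a fixed point of the shifted objective encoded by $Q^{\pi}$ rather than of $J$. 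Combining this with the non-absorbing conclusion of Proposition 1 then delivers the contradiction.
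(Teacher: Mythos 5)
Your first two paragraphs are essentially the paper's own proof. You unroll the soft Bellman operator at its fixed point to get $Q^{\pi}(s_{t},a_{t}) = r_{t} + \mathbb{E}\bigl[\sum_{l\geq 1}\gamma^{l}\bigl(r_{t+l}+\alpha\mathcal{H}(s_{t+l})\bigr)\bigr]$, read off that the entropy shaping is present at every step except the anchor step $l=0$, note that this time-indexed shaping is not absorbing in the sense of Definition 1, and contrast it with Equation \ref{eq:18}, which corresponds to the uniform shaping $f=\alpha\mathcal{H}$ at every step. That is exactly the argument in the paper, which stops at ``thus it contradicts.''

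Your third paragraph goes beyond the paper, and you have correctly identified where the real gap sits: neither the unrolled expression nor Definition 1 by itself bridges ``$Q^{\pi}$ is not the return of $J$'' to ``SAC does not optimize $J$.'' However, your bridge is asserted rather than proven, and as written it cuts against itself. You observe that minimizing the projection loss \ref{eq:prj} at a state $s$ maximizes $\mathbb{E}_{a\sim\pi'}[Q^{\pi}(s,a)] + \alpha\mathcal{H}^{\pi'}(s)$; but that is precisely the re-injection of the anchor-state entropy that the fixed point of $\mathcal{T}^{\pi}$ omits, and it is the mechanism by which the original SAC analysis (and Theorem \ref{thm:soft} here, which holds for the shared fixed point) ties the KL projection to a soft improvement guarantee. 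So the claim that ``the stationary point of the induced policy iteration is a fixed point of the shifted objective encoded by $Q^{\pi}$ rather than of $J$'' does not follow from what you wrote; to establish it you would have to show that the evaluation--projection loop's stationary policy differs from a maximizer of $J$, e.g.\ by exhibiting the exact discrepancy $J(\pi)-\eta(\pi) = \alpha\,\mathbb{E}_{s_{0}\sim\rho_{0}}[\mathcal{H}^{\pi}(s_{0})]$ (with $\eta$ as in Equation \ref{eq:6}) and arguing that this policy-dependent initial-state term changes the optimizer or the update direction. Neither you nor the paper does this, so your addition should be viewed as flagging the missing step, not closing it; the portion the paper actually proves, you reproduce faithfully.
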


The incompatibility with GAE would limit its use for the on-policy algorithms, especially when a value critic involved only. In next part, we will deliver our method formally, and answer that why even this inconsistency exists, SAC still works.

\section{Entropy Augmented Reinforcement Learning}
We consider an entropy augmented expected discounted reward function, with notation overloaded
\begin{equation}
  \label{eq:6}  
    \eta (\pi) = \mathbb{E}_{\tau}\Biggl[\sum\limits_{t=0}^{\infty} \gamma^{t}\bigl(r(s_{t}, a_{t}) + \gamma \alpha \mathbb{E}_{s_{t+1}} \mathcal{H} (s_{t+1})\bigr) \Biggr]
\end{equation}

The temperature $\alpha$ determines the relative importance of the expected entropy of next state $s_{t + 1}$ against the reward. And thus it gives the agent a hindsight to choose an action that has averagely high uncertainty.

We define a ``bootstrap'' Bellman operator analogous to the soft Bellman operator
\begin{equation}
  \label{eq:21}
  \mathcal{T}^{\pi} Q(s_{t}, a_{t}) = r(s_{t}, a_{t}) + \gamma \mathbb{E}_{s_{t + 1}}[V(s_{t + 1}) + \alpha \mathcal{H}(s_{t + 1})],
\end{equation}
where
\begin{equation}
  \label{eq:22}
  V(s_{t}) = \mathbb{E}_{a_{t} \sim \pi}[Q(s_{t}, a_{t})]
\end{equation}

By extracting the entropy out, and associate with the reward, it is not difficult to see that $\eta(\pi) = \mathbb{E}_{s_{0}}[V^{\pi}(s_{0})] = \mathbb{E}_{s_{0}, a_{0}}[Q^{\pi}(s_{0}, a_{0})]$, thus it is consistent in the optimization scenario.

This association enables it to be combined with the variance reduction method such as GAE, by simply transforming reward function as $\hat{r}(s, a) = r(s, a) + \gamma \alpha \mathcal{H}(s')$, when simulation involved.

Similarly, we can derive a performance-difference lemma for the augmented objective
\begin{equation}
  \label{eq:22}
  \eta (\tilde{\pi}) = \eta (\pi) + \mathbb{E}_{\substack{s \sim \rho_{\tilde{\pi}}\\ a \sim \tilde{\pi}\\ s' \sim p}} [A^{\pi}(s, a) + \gamma \alpha (\tilde{\mathcal{H}}(s') - \mathcal{H}(s'))]
\end{equation}
With an old policy $\pi$ and a parameterized policy $\pi_{\theta}$ given, we denote that
\begin{equation}
  \label{eq:obj}
  \mathcal{L}_{\pi}(\pi_{\theta}) = \mathbb{E}_{\rho_{\pi_{\theta}}, \pi_{\theta}, p} [A^{\pi}(s, a) + \gamma \alpha \mathcal{H}^{\pi_{\theta}}(s')]
\end{equation}
And since $\mathcal{H}^{\pi}$ is state-dependent only, it is not difficult to derive that $\nabla_{\theta}\eta(\pi_{\theta}) = \nabla_{\theta}\mathcal{L}_{\pi}(\pi_{\theta})$.

For trust region methods, whenever the successor policy $\pi_{\theta}$ is as close as the origin policy $\pi$, it instead approximately optimizes
\begin{equation}
  \label{eq:approxobj}
  \hat{\mathcal{L}}_{\pi}(\pi_{\theta}) = \mathbb{E}_{\rho_{\pi}, \pi_{\theta}, p} [A^{\pi}(s, a) + \gamma \alpha \mathcal{H}^{\pi_{\theta}}(s')]
\end{equation}

While it is appealing to incorporate the entropy bonus to encourage exploration, we don't know how well this approximation is and how the temperature parameter affects the learning.

\subsection{Theoretical Analysis}
We firstly answer the first question by stating that
\begin{theorem}
  \label{thm:1}
  Let $\zeta = D_{\text{KL}}^{\text{max}}(\pi_{\text{old}}, \pi_{\text{new}})$. Then the following bound holds:
  \begin{equation}
    \label{eq:10}
    \begin{array}{c}
      \eta (\pi_{\text{new}}) \geq \hat{\mathcal{L}}_{\pi_{\text{old}}}(\pi_{\text{new}}) - \frac{4 \epsilon \gamma}{(1 - \gamma)^{2}} \zeta^{2} \\
      \text{where}\ \epsilon = \max_{s,a}|A^{\pi}(s, a)|
    \end{array}
  \end{equation}
\end{theorem}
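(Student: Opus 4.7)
The plan is to mirror Schulman et al.'s TRPO derivation and then use Pinsker's inequality to reach the KL form asserted in Eq.~10. First I would invoke the augmented performance-difference identity (Eq.~22) to rewrite
\begin{equation*}
\eta(\pi_{\text{new}}) - \eta(\pi_{\text{old}}) = \mathbb{E}_{\substack{s\sim\rho_{\pi_{\text{new}}}\\ a\sim\pi_{\text{new}}\\ s'\sim p}}\bigl[A^{\pi_{\text{old}}}(s,a) + \gamma\alpha(\mathcal{H}^{\pi_{\text{new}}}(s') - \mathcal{H}^{\pi_{\text{old}}}(s'))\bigr].
\end{equation*}
Subtracting $\hat{\mathcal{L}}_{\pi_{\text{old}}}(\pi_{\text{new}})$ from this identity isolates a pure state-distribution mismatch of the form $\sum_{t\geq 0}\gamma^{t}\bigl(P_{\pi_{\text{new}}}(s_t=s)-P_{\pi_{\text{old}}}(s_t=s)\bigr)\,g(s)$, where $g$ is the per-state expectation of the bracketed, bonus-augmented advantage.

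The heart of the proof is to control this mismatch by a coupling argument. I would run two copies of the MDP under $\pi_{\text{old}}$ and $\pi_{\text{new}}$ driven by a common random source, so at each visited state the two chains take the same action with probability at least $1 - D_{TV}(\pi_{\text{old}}(\cdot|s)\,\|\,\pi_{\text{new}}(\cdot|s)) \ge 1 - D_{TV}^{\max}$. As long as the chains have not yet diverged the contribution to the mismatch vanishes; once they diverge the per-step contribution is at most $2\epsilon$ (after cancelling the advantage baseline). Summing the geometric-in-$\gamma$ tail in the same way as in Schulman et al.\ gives
\begin{equation*}
\bigl|\eta(\pi_{\text{new}}) - \hat{\mathcal{L}}_{\pi_{\text{old}}}(\pi_{\text{new}})\bigr| \le \frac{4\epsilon\gamma}{(1-\gamma)^{2}}\bigl(D_{TV}^{\max}\bigr)^{2}.
\end{equation*}
Finally, Pinsker's inequality $(D_{TV})^{2}\le \tfrac12 D_{KL}$ upgrades the right-hand side into the stated $\tfrac{4\epsilon\gamma}{(1-\gamma)^{2}}\zeta^{2}$ form, which, after the standard sign choice, yields the one-sided lower bound in the theorem.

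The step I expect to require the most care is handling the entropy differential $\mathcal{H}^{\pi_{\text{new}}}(s')-\mathcal{H}^{\pi_{\text{old}}}(s')$, because it is not \emph{a priori} bounded by $\epsilon = \max_{s,a}|A^{\pi}|$. The plan is to argue that along any coupled trajectory on which the two policies have agreed so far, the marginal over $s'$ is identical under $\pi_{\text{old}}$ and $\pi_{\text{new}}$, so the entropy contribution drops out; the residual contribution appears only at divergence events, which are already weighted by $D_{TV}^{\max}$, and can be absorbed either into a slightly enlarged $\epsilon$ or by imposing a uniform bound on $\alpha|\mathcal{H}|$ that leaves the asymptotic constant unchanged. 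Establishing this absorption cleanly, rather than the standard TRPO reduction that relies on bounded advantages alone, is the main wrinkle introduced by the entropy augmentation.
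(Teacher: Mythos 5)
Your overall strategy --- the TRPO coupling argument applied to the augmented performance-difference identity --- is the natural route (the paper itself only remarks the bound is ``reminiscent of TRPO''), but two of your steps do not go through as written. The decisive one is the last: Pinsker's inequality reads $D_{TV}^{2} \le \tfrac{1}{2} D_{KL}$, so starting from your total-variation bound $\frac{4\epsilon\gamma}{(1-\gamma)^{2}}\bigl(D_{TV}^{\max}\bigr)^{2}$ it converts the penalty into $\frac{2\epsilon\gamma}{(1-\gamma)^{2}} D_{KL}^{\max}$, i.e.\ a penalty \emph{linear} in $\zeta$, not quadratic. There is no inequality running the other way: $D_{TV} \le D_{KL}$ fails exactly in the small-divergence regime that matters here, where $D_{TV} \approx \sqrt{D_{KL}/2} \gg D_{KL}$. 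So Pinsker cannot ``upgrade'' the TV-squared bound into the stated $\zeta^{2}$ form with $\zeta = D_{\text{KL}}^{\max}$; what your coupling argument actually yields is the bound with $\zeta$ read as $D_{TV}^{\max}$ (TRPO's Theorem~1 verbatim), or a first-power KL penalty after Pinsker. The squared-KL statement is strictly stronger and is not reachable by this chain, so you must either prove it by a different mechanism or note that the theorem should be read with the TV (or first-power KL) quantity.

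The second gap is the entropy differential, which you flag but do not resolve. Subtracting $\hat{\mathcal{L}}_{\pi_{\text{old}}}(\pi_{\text{new}})$ from the performance-difference identity does not leave ``a pure state-distribution mismatch'': the surrogate carries $\gamma\alpha\mathcal{H}^{\pi_{\text{new}}}(s')$ while the identity carries the difference $\gamma\alpha\bigl(\mathcal{H}^{\pi_{\text{new}}}(s') - \mathcal{H}^{\pi_{\text{old}}}(s')\bigr)$ (and the surrogate lacks the additive $\eta(\pi_{\text{old}})$ baseline), so integrand discrepancies survive alongside the distribution mismatch and must be accounted for explicitly. Moreover, the coupling lemma you invoke relies on the advantage having zero mean under $a \sim \pi_{\text{old}}$ at every state, which converts each divergence event into a cost of at most $2\epsilon$; the entropy-augmented integrand does not have this zero-mean property (the expectation of the entropy differential under $\pi_{\text{old}}$ does not vanish), so the $2\epsilon$ accounting breaks unless you bound that residual separately. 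Your two proposed fixes --- enlarging $\epsilon$ or assuming a uniform bound on $\alpha|\mathcal{H}|$ --- respectively change the constant in the statement (where $\epsilon$ is defined exactly as $\max_{s,a}|A^{\pi}|$) or introduce a hypothesis the theorem does not carry; note also that entropy is not in general continuous in TV or KL for continuous action spaces, so the residual at divergence events needs an explicit bound, e.g.\ via the $C_{\mathcal{H}}$ assumption of Lemma~\ref{lm:2}, which would surface as an additional additive term of order $\frac{\gamma\alpha C_{\mathcal{H}}}{1-\gamma}$ times the coupling failure probability rather than being silently absorbed.
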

This means as long as the dissimilarity between adjacent policies is small under the metric KL-divergence, the approximation is considerably good, which is reminiscent of TRPO, allowing the data at hand to be fully utilized while encouraging the exploration.

We also define the Bellman backups w.r.t. $V$ for later use
\begin{equation}
  \label{eq:11}
  \begin{aligned}
    \mathcal{T}_{\alpha}^{\pi}V(s) & = \mathbb{E}_{a \sim \pi}\bigl[\hat{r}(s, a) + \gamma \mathbb{E}_{s' \sim p}[V (s')]\bigr] \\
    \mathcal{T}_{\alpha}^{*}V(s) & = \max_{\pi}{\mathbb{E}_{a \sim \pi}\bigl[\hat{r}(s, a) + \gamma \mathbb{E}_{s' \sim p}[V (s')]\bigr]}
  \end{aligned}  
\end{equation}
where $\hat{r}(s, a) = r(s, a) + \gamma \alpha \mathbb{E}_{s' \sim p}[\mathcal{H}(s')]$, and subscript indicates the relation with $\alpha$.

For which the following holds:
\begin{lemma}
  \label{lm:2}
  Assume $|r| < C_{r}$ and $|\mathcal{H}| < C_{\mathcal{H}}$, then both $\mathcal{T}_{\alpha}^{\pi}$ and $\mathcal{T}_{\alpha}^{*}$ are $\gamma$-contraction.
\end{lemma}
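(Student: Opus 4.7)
The plan is to follow the standard Banach fixed-point style argument for Bellman operators, adapted to the fact that the shaped reward $\hat{r}(s,a) = r(s,a) + \gamma\alpha\mathbb{E}_{s'\sim p}[\mathcal{H}(s')]$ is itself independent of $V$. First I would verify that the operators map the Banach space of bounded real-valued functions on $\mathcal{S}$ (under the sup norm) into itself: from $|r| < C_r$ and $|\mathcal{H}| < C_{\mathcal{H}}$ we get $|\hat{r}| < C_r + \gamma\alpha C_{\mathcal{H}}$, so whenever $\|V\|_\infty < \infty$ both $\mathcal{T}_\alpha^\pi V$ and $\mathcal{T}_\alpha^* V$ are bounded. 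This is the reason the two boundedness assumptions appear in the statement.

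Next I would prove the contraction for $\mathcal{T}_\alpha^\pi$. Given any two bounded $V_1, V_2$, the $\hat{r}$ terms cancel in the difference, leaving
\begin{equation*}
  \mathcal{T}_\alpha^\pi V_1(s) - \mathcal{T}_\alpha^\pi V_2(s) = \gamma\,\mathbb{E}_{a\sim\pi, s'\sim p}\bigl[V_1(s') - V_2(s')\bigr].
\end{equation*}
Taking absolute values, applying Jensen, and bounding $|V_1(s') - V_2(s')| \le \|V_1 - V_2\|_\infty$ uniformly, then taking the sup over $s$, yields $\|\mathcal{T}_\alpha^\pi V_1 - \mathcal{T}_\alpha^\pi V_2\|_\infty \le \gamma\|V_1 - V_2\|_\infty$, which is precisely the $\gamma$-contraction property.

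For $\mathcal{T}_\alpha^*$ the argument is the same once I invoke the elementary inequality $|\sup_\pi f(\pi) - \sup_\pi g(\pi)| \le \sup_\pi |f(\pi) - g(\pi)|$ applied pointwise in $s$. Choosing $f(\pi) = \mathbb{E}_{a\sim\pi}[\hat{r}(s,a) + \gamma\mathbb{E}_{s'}V_1(s')]$ and $g$ analogously for $V_2$, the $\hat{r}$ terms again cancel inside the absolute value, reducing the bound to the same $\gamma\mathbb{E}[V_1(s') - V_2(s')]$ expression as before, which is $\le \gamma\|V_1 - V_2\|_\infty$ uniformly in $\pi$.

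There is no real obstacle here; the only subtlety worth flagging is that the entropy-augmented reward $\hat{r}$ is independent of the argument $V$ of the operator (in contrast to the soft Bellman operator of SAC, where a $-\alpha\log\pi$ term is folded in through $V(s_{t+1})$ and requires extra care). This independence is exactly what makes the proof collapse to the textbook contraction argument, and it is also what justifies the earlier claim that the reshaping $\hat{r}(s,a) = r(s,a) + \gamma\alpha\mathcal{H}(s')$ is absorbing and therefore compatible with GAE.
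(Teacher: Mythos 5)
Your proof is correct and is exactly the standard Banach-contraction argument that Lemma \ref{lm:2} rests on: since the shaped reward $\hat{r}$ does not depend on the value function being backed up, the difference of backups reduces to $\gamma$ times an averaged (or maximized, via $|\sup_\pi f - \sup_\pi g| \le \sup_\pi|f-g|$) difference of values, and the boundedness assumptions only serve to keep the operators on the space of bounded functions. This matches the paper's (omitted, textbook) reasoning, so there is nothing further to add.
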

This is useful since it tells us that the optimal value function is unique.

In retrospect of the reward shaping, while any augmentation to the reward will result in a shifted MDP $M'$ against the original MDP $M$, \cite{DBLP:conf/icml/NgHR99} first proved that there is only a type of augmentation -- potential-based shaping function $f(s, a, s') = \gamma \varPhi(s') - \varPhi(s)$, where $\varPhi: \mathcal{S} \rightarrow \mathbb{R}$ is a real-valued function that sustains the policy invariance, that is, optimal policies $\pi_{M'}^{*}$ in $M'$ and $\pi_{M}^{*}$ in $M$ are identical. Unfortunately, our augmenting term is not a potential-based shaping function, thus it may not sustain the policy invariance. But the following theorem tells us that if we appropriately control the temperature $\alpha$, we can asymptotically sustain the policy invariance, on the basis of Lemma \ref{lm:2}:

\begin{theorem}
  \label{thm:3}
  \textbf{(Optimal Policy Error Bound)} the error between the augmented value function $\tilde{V}^{*}$ and the original counterpart $V^{*}$ can be bounded as:
  \begin{equation}
    \label{eq:12}
    \|\tilde{V}^{*} - V^{*}\|_{\infty} \leq \frac{\gamma}{1 - \gamma} \alpha C_{\mathcal{H}}
  \end{equation}
\end{theorem}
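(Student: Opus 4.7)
The plan is to bound the distance between the two fixed points by combining the contraction from Lemma \ref{lm:2} with a one-step comparison of the two Bellman optimality operators. Let $\mathcal{T}_{0}^{*}$ denote the vanilla Bellman optimality operator (the case $\alpha = 0$) and $\mathcal{T}_{\alpha}^{*}$ its augmented counterpart from Equation \ref{eq:11}. By Lemma \ref{lm:2}, both operators are $\gamma$-contractions on $(\|\cdot\|_{\infty})$, so $V^{*}$ and $\tilde{V}^{*}$ are their unique fixed points: $V^{*} = \mathcal{T}_{0}^{*} V^{*}$ and $\tilde{V}^{*} = \mathcal{T}_{\alpha}^{*} \tilde{V}^{*}$.

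Next I would use the standard triangle-inequality trick for comparing fixed points of nearby operators. Writing
\begin{equation*}
\tilde{V}^{*} - V^{*} = \mathcal{T}_{\alpha}^{*}\tilde{V}^{*} - \mathcal{T}_{0}^{*}V^{*} = \bigl(\mathcal{T}_{\alpha}^{*}\tilde{V}^{*} - \mathcal{T}_{0}^{*}\tilde{V}^{*}\bigr) + \bigl(\mathcal{T}_{0}^{*}\tilde{V}^{*} - \mathcal{T}_{0}^{*}V^{*}\bigr),
\end{equation*}
and applying $\|\cdot\|_{\infty}$, the second bracket is at most $\gamma\|\tilde{V}^{*} - V^{*}\|_{\infty}$ by the contraction of $\mathcal{T}_{0}^{*}$. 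For the first bracket I exploit the elementary inequality $|\max_{\pi} f(\pi) - \max_{\pi} g(\pi)| \leq \max_{\pi}|f(\pi) - g(\pi)|$ applied pointwise in $s$, so that
\begin{equation*}
\bigl|\mathcal{T}_{\alpha}^{*}\tilde{V}^{*}(s) - \mathcal{T}_{0}^{*}\tilde{V}^{*}(s)\bigr| \leq \max_{\pi} \mathbb{E}_{a \sim \pi}\bigl[\gamma\alpha\,\bigl|\mathbb{E}_{s' \sim p}\mathcal{H}(s')\bigr|\bigr] \leq \gamma\alpha C_{\mathcal{H}},
\end{equation*}
since the only difference in the two operators' definitions is the shaping term $\gamma\alpha\mathbb{E}_{s'}[\mathcal{H}(s')]$, which is bounded in absolute value by $\gamma\alpha C_{\mathcal{H}}$ under the hypothesis $|\mathcal{H}| < C_{\mathcal{H}}$.

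Combining these two estimates yields $\|\tilde{V}^{*} - V^{*}\|_{\infty} \leq \gamma\alpha C_{\mathcal{H}} + \gamma\|\tilde{V}^{*} - V^{*}\|_{\infty}$, and rearranging gives the claimed bound $\|\tilde{V}^{*} - V^{*}\|_{\infty} \leq \frac{\gamma}{1 - \gamma}\alpha C_{\mathcal{H}}$. The only subtle step is the handling of the max, but that is a one-line calculation; the whole proof is essentially a perturbation argument for a contractive operator. As a corollary one sees that the policy-invariance gap vanishes at rate $\mathcal{O}(\alpha)$, motivating the temperature-annealing strategy discussed later.
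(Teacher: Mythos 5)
Your proposal is correct: the fixed-point perturbation argument, splitting $\tilde{V}^{*} - V^{*}$ into an operator-difference term bounded by $\gamma\alpha C_{\mathcal{H}}$ and a contraction term bounded via Lemma \ref{lm:2}, then rearranging to get $\frac{\gamma}{1-\gamma}\alpha C_{\mathcal{H}}$, is exactly the route the paper takes (the theorem is explicitly built on the contraction property of $\mathcal{T}_{\alpha}^{*}$). The handling of the max via $|\max_{\pi} f - \max_{\pi} g| \leq \max_{\pi}|f-g|$ is also the standard step used there, so nothing is missing.
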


This theorem says that so long as we let $\alpha \rightarrow 0$, then we are guaranteed any $\pi_{M'}^{*}$ (where $M'$ is the MDP shifted by our method) we might be trying to learn will also be optimal in $M$ (the original MDP). It hints us that annealing the temperature $\alpha$ is crucial while encouraging the agent to explore.

And surprisingly, the method also satisfies the soft policy improvement theorem stated in SAC, without any further modification. 
\begin{theorem}
  \label{thm:soft}
  Let $\tilde{\pi}$ be within $\Pi$, minimizing the projection loss \ref{eq:prj} against $\pi$ for any $s \in \mathcal{S}$. Then $Q^{\tilde{\pi}}(s_{t}, a_{t}) \geq Q^{\pi}(s_{t}, a_{t})$ for all $(s_{t}, a_{t}) \in \mathcal{S} \times \mathcal{A}$.
\end{theorem}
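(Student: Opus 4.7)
The plan is to follow the two-step template of SAC's soft policy improvement argument, adapted to our Bellman operator (Equation \ref{eq:21}). First I would exploit the optimality of $\tilde{\pi}$: since $\tilde{\pi}$ minimizes $\mathcal{L}(\,\cdot\,|\,s)$ over $\Pi$ at every state, $\mathcal{L}(\tilde{\pi}|s) \leq \mathcal{L}(\pi|s)$. Expanding both KL divergences, the normalizing constant $\log Z^{\pi}(s)$ is policy-independent and cancels, leaving the pointwise inequality
\begin{equation*}
\mathbb{E}_{a \sim \tilde{\pi}}[Q^{\pi}(s,a)] + \alpha \mathcal{H}^{\tilde{\pi}}(s) \ \geq \ \mathbb{E}_{a \sim \pi}[Q^{\pi}(s,a)] + \alpha \mathcal{H}^{\pi}(s) \ = \ V^{\pi}(s) + \alpha \mathcal{H}^{\pi}(s).
\end{equation*}

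The crucial observation is that the right-hand side is precisely the quantity $V^{\pi}(s_{t+1}) + \alpha \mathcal{H}^{\pi}(s_{t+1})$ sitting inside the expectation of our Bellman backup. Although $V^{\pi}$ here does not itself absorb the entropy as in SAC, the entropy is recovered one level up by the Bellman operator, so the structural alignment that drives SAC's proof still holds without modification.

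Next I would unroll $Q^{\pi}(s_t, a_t) = r(s_t, a_t) + \gamma \mathbb{E}_{s_{t+1}}[V^{\pi}(s_{t+1}) + \alpha \mathcal{H}^{\pi}(s_{t+1})]$ and apply the inequality once at $s_{t+1}$ to obtain
\begin{equation*}
Q^{\pi}(s_t, a_t) \leq r(s_t, a_t) + \gamma \mathbb{E}_{s_{t+1}}\!\bigl[\mathbb{E}_{a_{t+1} \sim \tilde{\pi}}[Q^{\pi}(s_{t+1}, a_{t+1})] + \alpha \mathcal{H}^{\tilde{\pi}}(s_{t+1})\bigr].
\end{equation*}
Expanding the inner $Q^{\pi}(s_{t+1}, a_{t+1})$ by the Bellman equation again and reapplying the inequality pushes the bound one step deeper. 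After $n$ iterations this yields
\begin{equation*}
Q^{\pi}(s_t, a_t) \leq \mathbb{E}_{\tilde{\pi}}\!\left[\sum_{l=0}^{n-1} \gamma^l r(s_{t+l}, a_{t+l}) + \sum_{l=1}^{n} \gamma^l \alpha \mathcal{H}^{\tilde{\pi}}(s_{t+l}) + \gamma^n Q^{\pi}(s_{t+n}, a_{t+n})\right].
\end{equation*}

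Finally I would send $n \to \infty$. Under the boundedness assumptions of Lemma \ref{lm:2}, the residual $\gamma^n Q^{\pi}(s_{t+n}, a_{t+n})$ vanishes, and the surviving sums coincide term-by-term with the unrolling of $Q^{\tilde{\pi}}$ through $\mathcal{T}^{\tilde{\pi}}$, giving $Q^{\pi}(s_t, a_t) \leq Q^{\tilde{\pi}}(s_t, a_t)$ pointwise. I do not expect a hard obstacle; the only delicate step is the bookkeeping of the alternating entropy swap at each recursion level and the justification that the residual term decays uniformly. Conceptually, the proof transfers intact because the quantity $V + \alpha \mathcal{H}$ that the projection loss controls is exactly the quantity our Bellman backup packages---this is precisely why the soft improvement theorem survives even though $V$ is no longer the ``soft'' value.
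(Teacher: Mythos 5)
Your proposal is correct and is essentially the argument the paper relies on: the paper notes that the bootstrap operator is conjugate to SAC's soft Bellman operator (they share the fixed point $Q^{\pi}$) and imports SAC's soft policy improvement lemma, whose proof is exactly your two steps --- the KL-projection inequality $\mathbb{E}_{a\sim\tilde{\pi}}[Q^{\pi}(s,a)]+\alpha\mathcal{H}^{\tilde{\pi}}(s)\geq V^{\pi}(s)+\alpha\mathcal{H}^{\pi}(s)$ followed by repeated application of the Bellman backup with the $\gamma^{n}$ residual vanishing under boundedness. The only points worth stating explicitly are that the comparison $\mathcal{L}(\tilde{\pi}|s)\leq\mathcal{L}(\pi|s)$ requires the old policy $\pi$ to lie in $\Pi$ (as in SAC) and that $\alpha>0$, so that minimizing the KL is equivalent to maximizing $\mathbb{E}_{a\sim\pi'}[Q^{\pi}(s,a)]+\alpha\mathcal{H}^{\pi'}(s)$.
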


Combining the following statement
\begin{proposition}
  Bootstrap Bellman operator is conjugate with soft Bellman operator w.r.t. $Q$
  \begin{proof}
    By plugging Equation \ref{eq:22} into Equation \ref{eq:21} and rewriting the entropy as the expectation of the log probability, then it follows. 
  \end{proof}
\end{proposition}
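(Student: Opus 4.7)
The plan is to read ``conjugate with respect to $Q$'' as the claim that the two operators, applied to the same action-value function $Q$, produce the same updated action-value function, even though they decompose the right-hand side differently through their respective value definitions. Under this reading I only have to verify that the two right-hand sides agree as functions of $Q$.

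First I would substitute the bootstrap value definition from Equation \ref{eq:22} into Equation \ref{eq:21} to get
\[
\mathcal{T}^{\pi} Q(s_{t}, a_{t}) = r(s_{t}, a_{t}) + \gamma\, \mathbb{E}_{s_{t+1}}\bigl[\mathbb{E}_{a_{t+1}\sim\pi}[Q(s_{t+1}, a_{t+1})] + \alpha\,\mathcal{H}(s_{t+1})\bigr].
\]
Second I would expand the entropy as $\mathcal{H}(s_{t+1}) = -\mathbb{E}_{a_{t+1}\sim\pi}[\log \pi(a_{t+1}|s_{t+1})]$, which is allowed since the entropy depends only on the state distribution and the factor can be moved inside the action expectation; merging the two inner expectations yields
\[
\mathcal{T}^{\pi} Q(s_{t}, a_{t}) = r(s_{t}, a_{t}) + \gamma\, \mathbb{E}_{s_{t+1}, a_{t+1}}\bigl[Q(s_{t+1}, a_{t+1}) - \alpha\, \log \pi(a_{t+1}|s_{t+1})\bigr].
\]
This is precisely the expression obtained by plugging the soft value of Equation \ref{eq:17} into the soft Bellman operator of Equation \ref{eq:16}, so the two $Q$-updates coincide on every input.

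There is essentially no technical obstacle: the proof is a one-line algebraic rewrite whose only subtle point is recognizing that the entropy term, being state-only, can be absorbed into the action expectation. What is worth stressing conceptually — and what I would add as a brief remark — is the consequence. Because the operators share the same fixed point in $Q$, they induce the same optimal policy, and the soft policy improvement argument of Theorem \ref{thm:soft} inherits unchanged from the SAC analysis. The genuine difference lies in the induced $V$: the soft operator folds the entropy into $V$, whereas the bootstrap operator keeps $V$ entropy-free and pushes the entropy into the reward via $\hat r = r + \gamma\alpha\,\mathbb{E}[\mathcal{H}(s')]$. This is exactly the decomposition that makes the bootstrap variant absorbing, hence compatible with GAE, while the soft variant is not.
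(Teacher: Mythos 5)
Your proof is correct and follows essentially the same route as the paper's: substitute the bootstrap value definition (Equation \ref{eq:22}) into the bootstrap operator (Equation \ref{eq:21}), rewrite $\mathcal{H}(s_{t+1})$ as $-\mathbb{E}_{a_{t+1}\sim\pi}[\log\pi(a_{t+1}|s_{t+1})]$, and observe that the result is exactly the soft Bellman update of Equations \ref{eq:16}--\ref{eq:17}, so the two operators coincide as maps on $Q$. Your added remark about the shared fixed point and the entropy-free $V$ being what makes the bootstrap variant absorbing is a faithful elaboration of the paper's surrounding discussion, not a different argument.
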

it reveals that two operators share with one $Q^{\pi}$ at the fixed point, which indicates that SAC is optimizing Equation \ref{eq:6}.

This explains why the newer version of SAC \cite{DBLP:journals/corr/abs-1812-05905} can drop the value critic, and still guarantee performance improvement. Even when the value critic appeared in \cite{DBLP:conf/icml/HaarnojaZAL18}, it still goes along with $Q$ function, where the value function just serves for a better regression.

\subsection{Practical Algorithm}
\begin{algorithm}[tb]
\caption{EARL for On-Policy Algorithms}
\label{alg:EARL-on}
\textbf{Input}: temperature $\alpha$\\
\textbf{Initial Parameter}: $\theta_{0}$
\begin{algorithmic}[1] 
  \FOR{iteration $i \gets 0, 1, \dots, N - 1$}{
    \FOR{step $t \gets 0, 1, \dots, M - 1$}{
      \STATE Execute policy $\pi_{\theta}$ in the environment
      \STATE Compute $\hat{r}(s_{t}, a_{t}) = r(s_{t}, a_{t}) + \gamma \alpha \mathcal{H}(s_{t+1})$
      \STATE Store experience in the batch $\mathcal{B}$
    }    
    \ENDFOR
    \STATE Approximate advantage function $\hat{A}$ for the batch $\mathcal{B}$
    \FOR{update $e \gets 0, 1, \dots, L - 1$}{
      \STATE Sample a mini-batch of $A_{t}$, and augment as $A_{t} + \gamma \alpha \mathcal{H}^{\pi_{\theta}}(s_{t + 1})$
      \STATE Update policy and value function by on-policy algorithms
    }
    \ENDFOR
  }
  \ENDFOR
\end{algorithmic}
\end{algorithm}

\begin{algorithm}[tb]
\caption{EARL Soft Actor-Critic}
\label{alg:EARL-sac}
\textbf{Input}: temperature $\alpha, \tau$\\
\textbf{Initial Parameter}: $\bar{\phi}, \phi, w, \theta$
\begin{algorithmic}[1] 
    \FOR{step $t \gets 0, 1, \dots, M - 1$}{
        \STATE Execute policy $\pi_{\theta}$ in the environment
        \STATE Store transition $(s_{t}, a_{t}, r_{t}, s_{t + 1})$ to the replay buffer $\mathcal{D}$
        \STATE Sample mini-batch of $n$ transitions $(s, a, r, s')$ from $\mathcal{D}$
        \STATE Compute $y_{Q} = r(s, a) + \gamma (V_{\bar{\phi}}(s') + \alpha \mathcal{H}(s'))$
        \STATE Update critic $w = w - \lambda_{w} \nabla_{w}{\frac{1}{n}\sum(Q_{w} - y_{Q})^{2}}$
        \STATE Sample action $a'$ for $s$, update actor w.r.t. the projection loss \ref{eq:6}
        \STATE Compute $y_{V} = Q_{w}(s, a')$
        \STATE Update value net $\phi \gets \phi - \lambda_{\phi} \nabla_{\phi}{\frac{1}{n}\sum(V_{\phi} - y_{V})^{2}}$
        \STATE Update target net $\bar{\phi} \gets (1 - \tau) \bar{\phi} + \tau \phi$
    }    
    \ENDFOR
\end{algorithmic}
\end{algorithm}
In practice, we would have no knowledge about the environment dynamics, therefore need resort to approximation for the surrogate function. For on-policy domain, there are only two places that we need care about, the reward transition and advantage augmentation as informed by Equation \ref{eq:obj}. As the data is sequentially correlated, it is expected that $r_{t} + \gamma \alpha \mathcal{H}_{t + 1}$ and $A_{t} + \gamma \alpha \mathcal{H}^{\pi_{\theta}}_{t + 1}$ are unbiased estimators of $\hat{r}$ and $\mathcal{L}_{\pi}(\pi_{\theta})$ respectively. And for a variant of SAC, the difference only lies at the target calculation for $Q$ and $V$. If the entropy has no closed form, one can just sample a few of samples to approximate. Full algorithms are listed at Algorithm \ref{alg:EARL-on} and \ref{alg:EARL-sac} accordingly.

The approach unifies different domains, which is general, simple and can be an effective tool to boost the exploration capability of the agent.

\begin{figure*}[ht]
\centering
\includegraphics[width=1\textwidth]{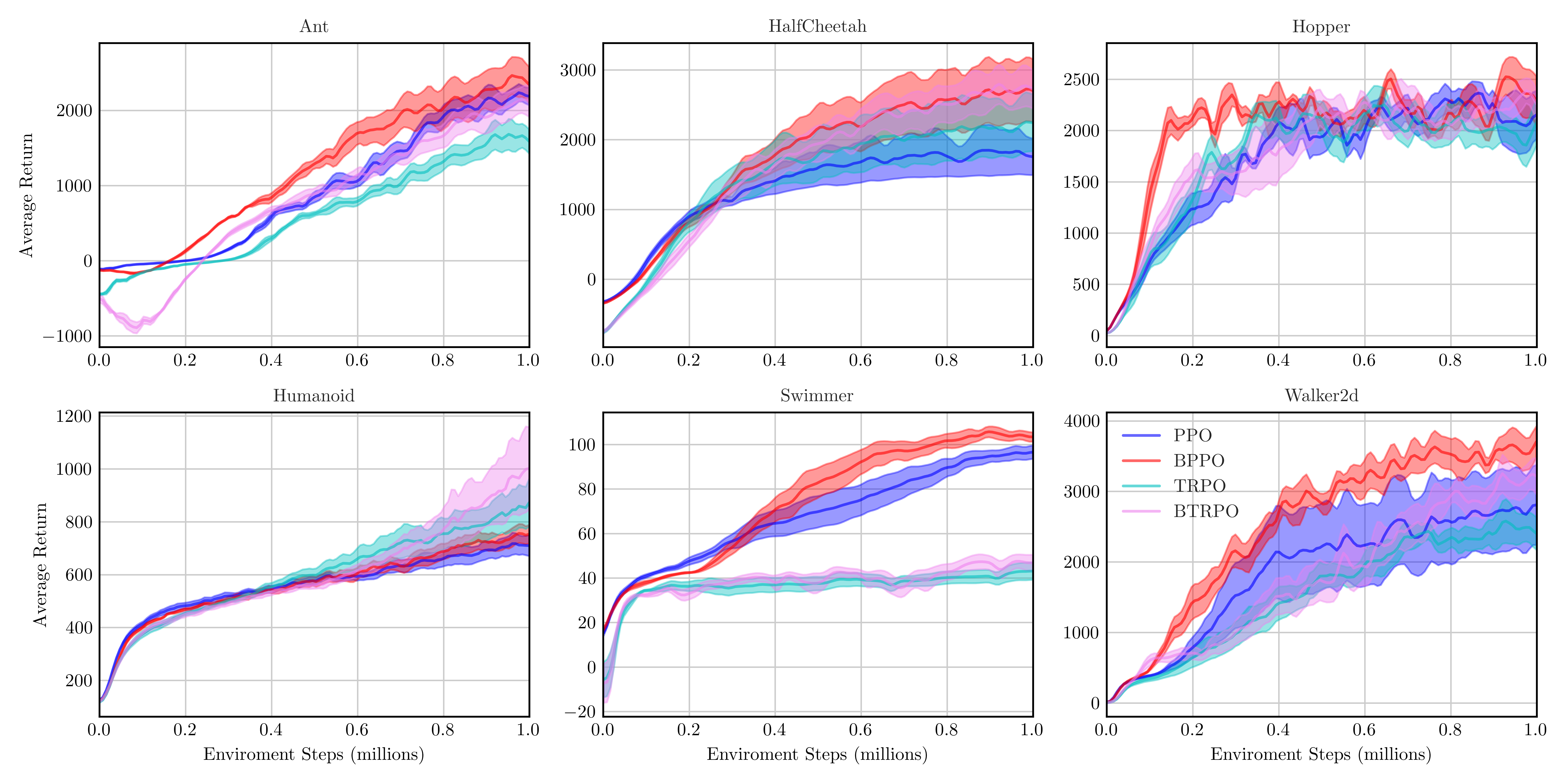}
\caption{Training curves on a set of MuJoCo tasks performed by the augmented TRPO and PPO against their baselines, averaged over 5 seeds and shaded with standard error.}
\label{fig:1}
\end{figure*}

\section{Experiments}
In this section, we will test our method on a set of on-policy algorithms such as TRPO and PPO, to answer whether it can promote the exploration capability of those two algorithms. And we make comparison with the first version of SAC \cite{DBLP:conf/icml/HaarnojaZAL18}, when a value critic is involved, to verify the effectiveness of the proposed approach. We conjecture that our method enjoys a better optimization property than SAC due the consistency as mentioned before. We address the importance of controlling the temperature $\alpha$, how it influence the learning process, and how to choose an appropriate annealing schedule. And we also distill the exploration bonus on two grid world environments, to show how the augmentation takes effect on simple environments.

\subsection{Continuous Benchmarks}
We compare our augmentation techniques as illustrated in Algorithm \ref{alg:EARL-on} to the standard TRPO and PPO, across several continuous control tasks from the OpenAI Gym benchmark suite \cite{DBLP:journals/corr/BrockmanCPSSTZ16} with the MuJoCo simulator \cite{DBLP:conf/iros/TodorovET12}. From the Figure \ref{fig:1}, we can see that our method can outperform on all tasks against both TRPO and PPO. We argue that the experimental discovery proposed by \cite{DBLP:conf/icml/AhmedR0S19} that the entropy has an insignificant effect on HalfCheetah environment, whereas our result shows that with the aforementioned techniques, we can achieve evident improvements.

We also compare our method with SAC-v1, which only differs from how it augments the reward. We align our algorithm with SAC-v1, including hyperparameters and policy parameterization. Since we also make use of the squashed Gaussian distribution, which doesn't have a closed-form entropy, therefore we need additional samples to estimate the entropy, by averaging over the log probabilities. Typically, only a few of samples $N = 5$ are enough to learn reliable features and obtain satisfactory performance, and we found it is not sensitive to this choice. The result displayed in Figure \ref{fig:2} shows that our method can outperform SAC-v1 across a set of tasks, being more stable and learning faster. We conjecture that this performance gain is due to the consistency between value functions and the objective to be optimized, in which the regression target is consistent, and therefore enjoying a easier optimization as learning evolves.

\begin{figure*}[t]
\centering
\includegraphics[width=1\textwidth]{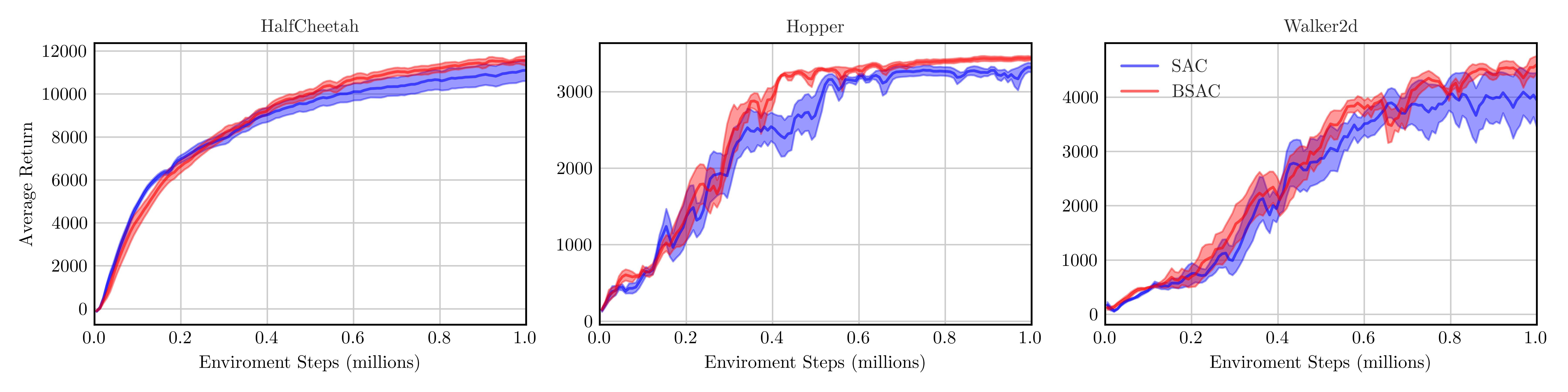}
\caption{Training curves of SAC and its variant on a set of MujoCo tasks, averaged over 5 seeds and shaded with standard error.}
\label{fig:2}
\end{figure*}

\subsection{Annealing Schedule}
While this injected exploration bonus could help the agent explore more, we are not interested quantifying the augmented value function, but in solving the original MDP $M$. As the Theorem \ref{thm:3} suggested, in order to recover to the original MDP, we need to anneal the temperature coefficient. But we also seek to use the augmentation to boost the exploration of the agent, which means we can not anneal it too fast. Inspired from the trick -- exponential decay, to shrink the learning rate in the deep learning, we propose to exponentially drop the temperature every update to control the behavior of the temperature while maintain a sufficient amount of exploration. It requires an additional adjustable parameter $\sigma$ -- the decay rate, that close to $1$ can help the agent to exploit what have learned. To further investigate the impact of different combinations, we consider to explore the temperature's magnitude and whether to decay or not, to demonstrate the effect of large or small temperature and examine the necessity of decaying. The experiments can be grouped by the constant (no decay) and the dynamic (with decay), where their magnitude varies internally. And those are performed on the HalfCheetah and Swimmer environments. 

\begin{figure}[ht]
\centering
\includegraphics[width=\columnwidth]{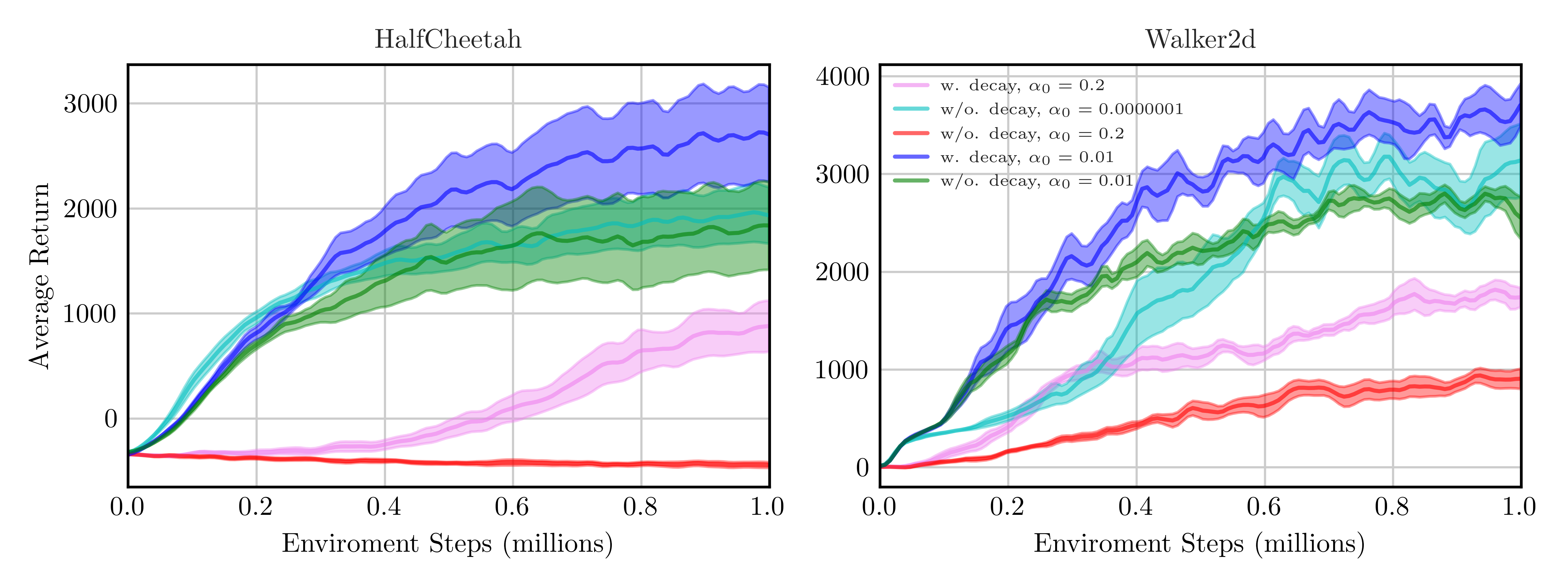}
\caption{Performance of different combinations on HalfCheetah and Walker2d environments averaged over 5 seeds. All augmentations are taken on PPO. The decay rate is consistently equal to $0.99$.}
\label{fig:3}
\end{figure}

The results of the Figure \ref{fig:3} reveal that:
\begin{itemize}
\item \textbf{Large} initial temperature is not preferred than \textbf{small} initial temperature.
\item \textbf{With} decay is more preferred than \textbf{no} decay.
\item \textbf{Small constant} can perform considerably well.
\end{itemize}

The reason for that large initial temperature would harm the performance is that the learning signal is overwhelmed by the augmenting term, which causes the agent to be too exploratory to exploit the beneficial experience. And smaller one makes the reward clearer, that could be more easily taken advantage of by the agent. This is known as the trade-off between exploration and exploitation. However, our results show that if the temperature parameter is properly controlled, we can actually be benefited from it. For practical concern, we suggest that decaying is a good pick in order to generalize across different tasks.

\subsection{Grid Worlds}

In this part, we introduce two grid world environments, one of which is originally from \cite{DBLP:conf/icml/AhmedR0S19}, and the other is a custom environment. The former one is a simple episodic task which has one suboptimum with a reward scalar 4.5 at the right top corner, and one optimum with 5.0 at the left bottom corner. The agent always starts from the left top corner. It is designed to test the discriminative ability of the agent when a suboptimum diverts the attention. The custom environment is more complicated, which has one fixed suboptimum and one random optimum. The suboptimum is at the right top corner with a reward scalar 0.5, if it is seized, the agent will be reset to the start at the left bottom corner, while the optimum with a reward scalar 1.0 is randomly moved to the unoccupied location every time been captured by the agent. We represent the agent, suboptimum, and optimum of the environment as red, green and blue block. We denote two environments as \textbf{Diagonal} and \textbf{TwoColors}, see Figure \ref{fig:4}. And we use $10 \times 10$ size for both environments.

\begin{figure}
\centering
\subfigure[Diagonal environment]{\label{fig:a}\includegraphics[width=40mm]{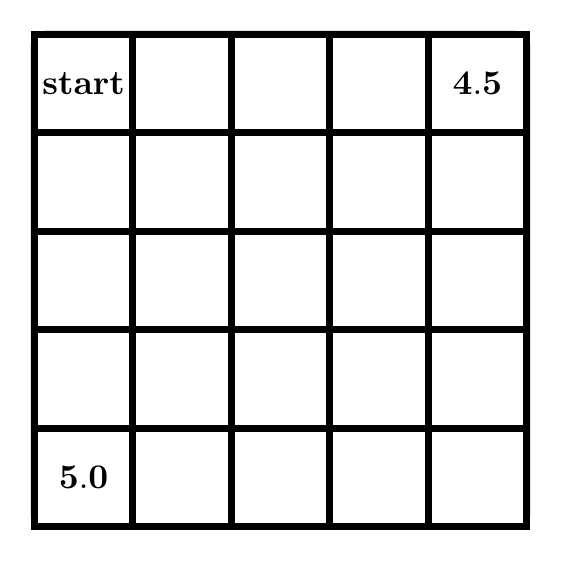}}
\subfigure[TwoColors environment]{\label{fig:b}\includegraphics[width=40mm]{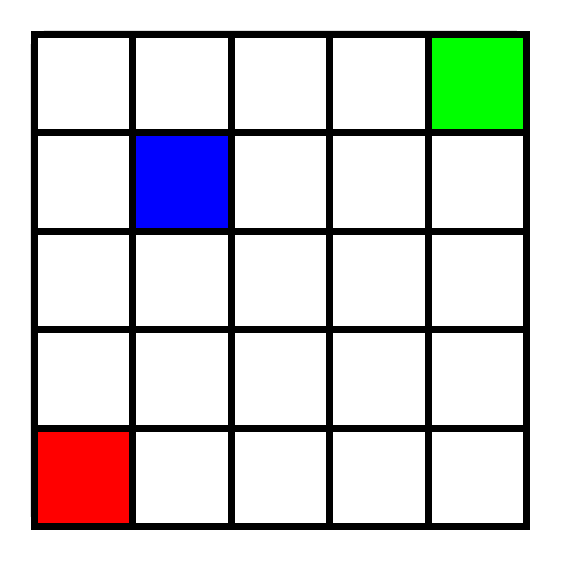}}
\caption{Custom environments}
\label{fig:4}
\end{figure}

Diagonal is about to investigate the effect of the exploration bonus at the previous chapter of the learning process, and to see whether it can speed up the agent to find the optimum. And TwoColors is proposed to investigate whether the augmentation can help the agent to escape from the suboptimums when the goal is randomized.

\begin{figure}[ht]
\centering
\includegraphics[width=\columnwidth]{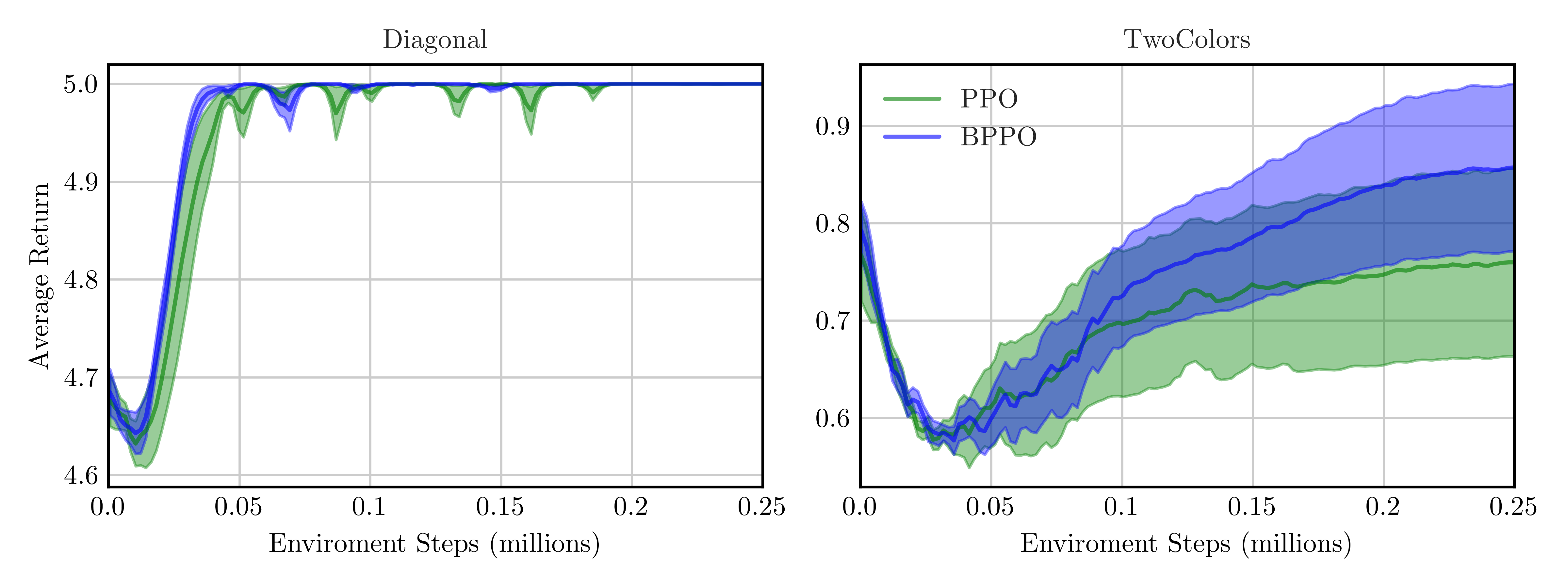}
\caption{Training curves on Diagonal and TwoColors averaged over 5 seeds.}
\label{fig:5}
\end{figure}

As we can see from the Figure \ref{fig:5}, on Diagonal, our method reaches optimum faster, and is more stable after convergence. This verifies that the exploration bonus injected by our method can help the agent explore more, converge fast, and be keener towards the optimum. On the other side, since TwoColors environment is more hard to explore, thus it needs more effort to prevent being trapped at local minimums. However, the result shows that PPO has around $50 \%$ probability averagely to stuck at the suboptimum, while entropy augmented PPO escapes from the suboptimum and reaches a higher reward. Both experiments indicate that our method can boost the exploration of the agent, as a result, achieve performance quicker and higher.

\section{Related Work}
Exploration capability of the agent can sometimes be a bottleneck for RL algorithms. The most intuitive way would be modifying the reward function with some stimulus reflecting the uncertainty of the perception of the environment of the agent. For instance, reward augmentation will change the behavior of the agent by considering the relative importance of the regularization term. Different regularization can take quite different effects, such as conservative with Kullback–Leibler divergence, and exploratory with the entropy. A appropriate choice can be of vital importance. In another possible perspective, the motivation can be extrinsic. \cite{DBLP:conf/iclr/Flet-BerliacFPP21} introduces an adversary that mimics the actor distribution, where the larger discrepancy would help the agent to follow a different strategy. The same idea is also reflected in \cite{DBLP:conf/iclr/BurdaESK19}, motivates the agent to explore the area where the prediction error against a fixed randomly initialized network is large.

Entropy regularization is a natural way to encourage the exploration, since it measures the uncertainty of the policy. Understanding the impact of the entropy was first studied by \cite{doi:10.1080/09540099108946587}, who proposed to incorporate entropy into networks to promote the exploration. And in practice, it is straightforward to utilize it as an add-on, such as in A3C \cite{DBLP:conf/icml/MnihBMGLHSK16}, which is used for regularizing the objective function to discourage the premature convergence. However, \cite{DBLP:journals/corr/NeuJG17} argue that it may not converge. By solving a constraint optimization problem \cite{DBLP:phd/us/Ziebart18}, it can be shown that the optimal policy is the softmax policy w.r.t. some defined preference functions, such that the optimal soft value functions defined in \cite{DBLP:conf/icml/HaarnojaTAL17}. In contrast, its actor-critic successor \cite{DBLP:conf/icml/HaarnojaZAL18} doesn't rely on those equations instead directly project the parameterized policy onto the softmax policy associated with the discounted value functions. By deriving the same fashion, \cite{DBLP:conf/nips/NachumNXS17} connected value and policy based reinforcement learning with the multi-step consistency. In contrast, since the algorithm is based on the value function rather than the action-value function, the aforementioned inconsistency issue doesn't exist. DAC \cite{DBLP:conf/icml/HanS21} imitates the soft Bellman operator while incorporates another so-called sample-aware entropy. It means it also implicitly optimizes another objective.

\cite{DBLP:journals/corr/SchulmanAC17} further investigate the equivalence between policy gradients and soft Q-learning, however, using relative entropy (KL divergence). And \cite{DBLP:conf/nips/YangLZ19} generalize it on several augmentation forms, such as Tsallis entropy etc. However, comparing to our method, those approaches are not absorbing, which limits their use far and wide. Unlike those, our method is general for both on and off-policy algorithms, and is handy to implement, modify and extend on a wide variety of algorithms.







\section{Conclusion}
In this paper, we present an entropy augmentation method to mitigate the exploration issue of TRPO and PPO, in a general, simple and effective way. We first observed the inconsistency of the SAC, and argued that it implicitly optimizes another objective as proposed in this paper. Our method bears similar properties of trust region methods, and is compatible with the variance reduction mechanism GAE. Our analysis suggests that it would be crucial to anneal the temperature parameter. And our empirical results show that our method indeed encourages the exploration, and can aid the agent converge fast, escape from suboptimums and achieve better performance. Future work would investigate methods that auto-tuning the temperature coefficient, such as that done in \cite{DBLP:journals/corr/abs-1812-05905} \cite{DBLP:journals/corr/abs-2112-02852}.

\bibliography{main.bib}

\begin{thebibliography}{27}
\providecommand{\natexlab}[1]{#1}
\providecommand{\url}[1]{\texttt{#1}}
\expandafter\ifx\csname urlstyle\endcsname\relax
  \providecommand{\doi}[1]{doi: #1}\else
  \providecommand{\doi}{doi: \begingroup \urlstyle{rm}\Url}\fi

\bibitem[Ahmed et~al.(2019)Ahmed, Roux, Norouzi, and
  Schuurmans]{DBLP:conf/icml/AhmedR0S19}
Zafarali Ahmed, Nicolas~Le Roux, Mohammad Norouzi, and Dale Schuurmans.
\newblock Understanding the impact of entropy on policy optimization.
\newblock In Kamalika Chaudhuri and Ruslan Salakhutdinov, editors,
  \emph{Proceedings of the 36th International Conference on Machine Learning,
  {ICML} 2019, 9-15 June 2019, Long Beach, California, {USA}}, volume~97 of
  \emph{Proceedings of Machine Learning Research}, pages 151--160. {PMLR},
  2019.
\newblock URL \url{http://proceedings.mlr.press/v97/ahmed19a.html}.

\bibitem[Brockman et~al.(2016)Brockman, Cheung, Pettersson, Schneider,
  Schulman, Tang, and Zaremba]{DBLP:journals/corr/BrockmanCPSSTZ16}
Greg Brockman, Vicki Cheung, Ludwig Pettersson, Jonas Schneider, John Schulman,
  Jie Tang, and Wojciech Zaremba.
\newblock Openai gym.
\newblock \emph{CoRR}, abs/1606.01540, 2016.
\newblock URL \url{http://arxiv.org/abs/1606.01540}.

\bibitem[Burda et~al.(2019)Burda, Edwards, Storkey, and
  Klimov]{DBLP:conf/iclr/BurdaESK19}
Yuri Burda, Harrison Edwards, Amos~J. Storkey, and Oleg Klimov.
\newblock Exploration by random network distillation.
\newblock In \emph{7th International Conference on Learning Representations,
  {ICLR} 2019, New Orleans, LA, USA, May 6-9, 2019}. OpenReview.net, 2019.
\newblock URL \url{https://openreview.net/forum?id=H1lJJnR5Ym}.

\bibitem[Burnetas and Katehakis(1997)]{DBLP:journals/mor/BurnetasK97}
Apostolos~N. Burnetas and Michael~N. Katehakis.
\newblock Optimal adaptive policies for markov decision processes.
\newblock \emph{Math. Oper. Res.}, 22\penalty0 (1):\penalty0 222--255, 1997.
\newblock \doi{10.1287/moor.22.1.222}.
\newblock URL \url{https://doi.org/10.1287/moor.22.1.222}.

\bibitem[Deisenroth et~al.(2013)Deisenroth, Neumann, and
  Peters]{DBLP:journals/ftrob/DeisenrothNP13}
Marc~Peter Deisenroth, Gerhard Neumann, and Jan Peters.
\newblock A survey on policy search for robotics.
\newblock \emph{Found. Trends Robotics}, 2\penalty0 (1-2):\penalty0 1--142,
  2013.
\newblock \doi{10.1561/2300000021}.
\newblock URL \url{https://doi.org/10.1561/2300000021}.

\bibitem[Flet{-}Berliac et~al.(2021)Flet{-}Berliac, Ferret, Pietquin, Preux,
  and Geist]{DBLP:conf/iclr/Flet-BerliacFPP21}
Yannis Flet{-}Berliac, Johan Ferret, Olivier Pietquin, Philippe Preux, and
  Matthieu Geist.
\newblock Adversarially guided actor-critic.
\newblock In \emph{9th International Conference on Learning Representations,
  {ICLR} 2021, Virtual Event, Austria, May 3-7, 2021}. OpenReview.net, 2021.
\newblock URL \url{https://openreview.net/forum?id=\_mQp5cr\_iNy}.

\bibitem[Haarnoja et~al.(2017)Haarnoja, Tang, Abbeel, and
  Levine]{DBLP:conf/icml/HaarnojaTAL17}
Tuomas Haarnoja, Haoran Tang, Pieter Abbeel, and Sergey Levine.
\newblock Reinforcement learning with deep energy-based policies.
\newblock In Doina Precup and Yee~Whye Teh, editors, \emph{Proceedings of the
  34th International Conference on Machine Learning, {ICML} 2017, Sydney, NSW,
  Australia, 6-11 August 2017}, volume~70 of \emph{Proceedings of Machine
  Learning Research}, pages 1352--1361. {PMLR}, 2017.
\newblock URL \url{http://proceedings.mlr.press/v70/haarnoja17a.html}.

\bibitem[Haarnoja et~al.(2018{\natexlab{a}})Haarnoja, Zhou, Abbeel, and
  Levine]{DBLP:conf/icml/HaarnojaZAL18}
Tuomas Haarnoja, Aurick Zhou, Pieter Abbeel, and Sergey Levine.
\newblock Soft actor-critic: Off-policy maximum entropy deep reinforcement
  learning with a stochastic actor.
\newblock In Jennifer~G. Dy and Andreas Krause, editors, \emph{Proceedings of
  the 35th International Conference on Machine Learning, {ICML} 2018,
  Stockholmsm{\"{a}}ssan, Stockholm, Sweden, July 10-15, 2018}, volume~80 of
  \emph{Proceedings of Machine Learning Research}, pages 1856--1865. {PMLR},
  2018{\natexlab{a}}.
\newblock URL \url{http://proceedings.mlr.press/v80/haarnoja18b.html}.

\bibitem[Haarnoja et~al.(2018{\natexlab{b}})Haarnoja, Zhou, Hartikainen,
  Tucker, Ha, Tan, Kumar, Zhu, Gupta, Abbeel, and
  Levine]{DBLP:journals/corr/abs-1812-05905}
Tuomas Haarnoja, Aurick Zhou, Kristian Hartikainen, George Tucker, Sehoon Ha,
  Jie Tan, Vikash Kumar, Henry Zhu, Abhishek Gupta, Pieter Abbeel, and Sergey
  Levine.
\newblock Soft actor-critic algorithms and applications.
\newblock \emph{CoRR}, abs/1812.05905, 2018{\natexlab{b}}.
\newblock URL \url{http://arxiv.org/abs/1812.05905}.

\bibitem[Han and Sung(2021)]{DBLP:conf/icml/HanS21}
Seungyul Han and Youngchul Sung.
\newblock Diversity actor-critic: Sample-aware entropy regularization for
  sample-efficient exploration.
\newblock In Marina Meila and Tong Zhang, editors, \emph{Proceedings of the
  38th International Conference on Machine Learning, {ICML} 2021, 18-24 July
  2021, Virtual Event}, volume 139 of \emph{Proceedings of Machine Learning
  Research}, pages 4018--4029. {PMLR}, 2021.
\newblock URL \url{http://proceedings.mlr.press/v139/han21a.html}.

\bibitem[Hsu et~al.(2020)Hsu, Mendler{-}D{\"{u}}nner, and
  Hardt]{DBLP:journals/corr/abs-2009-10897}
Chloe~Ching{-}Yun Hsu, Celestine Mendler{-}D{\"{u}}nner, and Moritz Hardt.
\newblock Revisiting design choices in proximal policy optimization.
\newblock \emph{CoRR}, abs/2009.10897, 2020.
\newblock URL \url{https://arxiv.org/abs/2009.10897}.

\bibitem[Kakade and Langford(2002)]{DBLP:conf/icml/KakadeL02}
Sham~M. Kakade and John Langford.
\newblock Approximately optimal approximate reinforcement learning.
\newblock In Claude Sammut and Achim~G. Hoffmann, editors, \emph{Machine
  Learning, Proceedings of the Nineteenth International Conference {(ICML}
  2002), University of New South Wales, Sydney, Australia, July 8-12, 2002},
  pages 267--274. Morgan Kaufmann, 2002.

\bibitem[Mnih et~al.(2015)Mnih, Kavukcuoglu, Silver, Rusu, Veness, Bellemare,
  Graves, Riedmiller, Fidjeland, Ostrovski, Petersen, Beattie, Sadik,
  Antonoglou, King, Kumaran, Wierstra, Legg, and
  Hassabis]{DBLP:journals/nature/MnihKSRVBGRFOPB15}
Volodymyr Mnih, Koray Kavukcuoglu, David Silver, Andrei~A. Rusu, Joel Veness,
  Marc~G. Bellemare, Alex Graves, Martin~A. Riedmiller, Andreas Fidjeland,
  Georg Ostrovski, Stig Petersen, Charles Beattie, Amir Sadik, Ioannis
  Antonoglou, Helen King, Dharshan Kumaran, Daan Wierstra, Shane Legg, and
  Demis Hassabis.
\newblock Human-level control through deep reinforcement learning.
\newblock \emph{Nat.}, 518\penalty0 (7540):\penalty0 529--533, 2015.
\newblock \doi{10.1038/nature14236}.
\newblock URL \url{https://doi.org/10.1038/nature14236}.

\bibitem[Mnih et~al.(2016)Mnih, Badia, Mirza, Graves, Lillicrap, Harley,
  Silver, and Kavukcuoglu]{DBLP:conf/icml/MnihBMGLHSK16}
Volodymyr Mnih, Adri{\`{a}}~Puigdom{\`{e}}nech Badia, Mehdi Mirza, Alex Graves,
  Timothy~P. Lillicrap, Tim Harley, David Silver, and Koray Kavukcuoglu.
\newblock Asynchronous methods for deep reinforcement learning.
\newblock In Maria{-}Florina Balcan and Kilian~Q. Weinberger, editors,
  \emph{Proceedings of the 33nd International Conference on Machine Learning,
  {ICML} 2016, New York City, NY, USA, June 19-24, 2016}, volume~48 of
  \emph{{JMLR} Workshop and Conference Proceedings}, pages 1928--1937.
  JMLR.org, 2016.
\newblock URL \url{http://proceedings.mlr.press/v48/mniha16.html}.

\bibitem[Nachum et~al.(2017)Nachum, Norouzi, Xu, and
  Schuurmans]{DBLP:conf/nips/NachumNXS17}
Ofir Nachum, Mohammad Norouzi, Kelvin Xu, and Dale Schuurmans.
\newblock Bridging the gap between value and policy based reinforcement
  learning.
\newblock In Isabelle Guyon, Ulrike von Luxburg, Samy Bengio, Hanna~M. Wallach,
  Rob Fergus, S.~V.~N. Vishwanathan, and Roman Garnett, editors, \emph{Advances
  in Neural Information Processing Systems 30: Annual Conference on Neural
  Information Processing Systems 2017, December 4-9, 2017, Long Beach, CA,
  {USA}}, pages 2775--2785, 2017.
\newblock URL
  \url{https://proceedings.neurips.cc/paper/2017/hash/facf9f743b083008a894eee7baa16469-Abstract.html}.

\bibitem[Neu et~al.(2017)Neu, Jonsson, and
  G{\'{o}}mez]{DBLP:journals/corr/NeuJG17}
Gergely Neu, Anders Jonsson, and Vicen{\c{c}} G{\'{o}}mez.
\newblock A unified view of entropy-regularized markov decision processes.
\newblock \emph{CoRR}, abs/1705.07798, 2017.
\newblock URL \url{http://arxiv.org/abs/1705.07798}.

\bibitem[Ng et~al.(1999)Ng, Harada, and Russell]{DBLP:conf/icml/NgHR99}
Andrew~Y. Ng, Daishi Harada, and Stuart Russell.
\newblock Policy invariance under reward transformations: Theory and
  application to reward shaping.
\newblock In Ivan Bratko and Saso Dzeroski, editors, \emph{Proceedings of the
  Sixteenth International Conference on Machine Learning {(ICML} 1999), Bled,
  Slovenia, June 27 - 30, 1999}, pages 278--287. Morgan Kaufmann, 1999.

\bibitem[Schulman et~al.(2015)Schulman, Levine, Abbeel, Jordan, and
  Moritz]{DBLP:conf/icml/SchulmanLAJM15}
John Schulman, Sergey Levine, Pieter Abbeel, Michael~I. Jordan, and Philipp
  Moritz.
\newblock Trust region policy optimization.
\newblock In Francis~R. Bach and David~M. Blei, editors, \emph{Proceedings of
  the 32nd International Conference on Machine Learning, {ICML} 2015, Lille,
  France, 6-11 July 2015}, volume~37 of \emph{{JMLR} Workshop and Conference
  Proceedings}, pages 1889--1897. JMLR.org, 2015.
\newblock URL \url{http://proceedings.mlr.press/v37/schulman15.html}.

\bibitem[Schulman et~al.(2016)Schulman, Moritz, Levine, Jordan, and
  Abbeel]{DBLP:journals/corr/SchulmanMLJA15}
John Schulman, Philipp Moritz, Sergey Levine, Michael~I. Jordan, and Pieter
  Abbeel.
\newblock High-dimensional continuous control using generalized advantage
  estimation.
\newblock In Yoshua Bengio and Yann LeCun, editors, \emph{4th International
  Conference on Learning Representations, {ICLR} 2016, San Juan, Puerto Rico,
  May 2-4, 2016, Conference Track Proceedings}, 2016.
\newblock URL \url{http://arxiv.org/abs/1506.02438}.

\bibitem[Schulman et~al.(2017{\natexlab{a}})Schulman, Abbeel, and
  Chen]{DBLP:journals/corr/SchulmanAC17}
John Schulman, Pieter Abbeel, and Xi~Chen.
\newblock Equivalence between policy gradients and soft q-learning.
\newblock \emph{CoRR}, abs/1704.06440, 2017{\natexlab{a}}.
\newblock URL \url{http://arxiv.org/abs/1704.06440}.

\bibitem[Schulman et~al.(2017{\natexlab{b}})Schulman, Wolski, Dhariwal,
  Radford, and Klimov]{DBLP:journals/corr/SchulmanWDRK17}
John Schulman, Filip Wolski, Prafulla Dhariwal, Alec Radford, and Oleg Klimov.
\newblock Proximal policy optimization algorithms.
\newblock \emph{CoRR}, abs/1707.06347, 2017{\natexlab{b}}.
\newblock URL \url{http://arxiv.org/abs/1707.06347}.

\bibitem[Sutton et~al.(1999)Sutton, McAllester, Singh, and
  Mansour]{DBLP:conf/nips/SuttonMSM99}
Richard~S. Sutton, David~A. McAllester, Satinder Singh, and Yishay Mansour.
\newblock Policy gradient methods for reinforcement learning with function
  approximation.
\newblock In Sara~A. Solla, Todd~K. Leen, and Klaus{-}Robert M{\"{u}}ller,
  editors, \emph{Advances in Neural Information Processing Systems 12, {[NIPS}
  Conference, Denver, Colorado, USA, November 29 - December 4, 1999]}, pages
  1057--1063. The {MIT} Press, 1999.
\newblock URL
  \url{http://papers.nips.cc/paper/1713-policy-gradient-methods-for-reinforcement-learning-with-function-approximation}.

\bibitem[Todorov et~al.(2012)Todorov, Erez, and
  Tassa]{DBLP:conf/iros/TodorovET12}
Emanuel Todorov, Tom Erez, and Yuval Tassa.
\newblock Mujoco: {A} physics engine for model-based control.
\newblock In \emph{2012 {IEEE/RSJ} International Conference on Intelligent
  Robots and Systems, {IROS} 2012, Vilamoura, Algarve, Portugal, October 7-12,
  2012}, pages 5026--5033. {IEEE}, 2012.
\newblock \doi{10.1109/IROS.2012.6386109}.
\newblock URL \url{https://doi.org/10.1109/IROS.2012.6386109}.

\bibitem[WILLIAMS and PENG(1991)]{doi:10.1080/09540099108946587}
RONALD~J. WILLIAMS and JING PENG.
\newblock Function optimization using connectionist reinforcement learning
  algorithms.
\newblock \emph{Connection Science}, 3\penalty0 (3):\penalty0 241--268, 1991.
\newblock \doi{10.1080/09540099108946587}.
\newblock URL \url{https://doi.org/10.1080/09540099108946587}.

\bibitem[Xu et~al.(2021)Xu, Hu, Liang, McAleer, Abbeel, and
  Fox]{DBLP:journals/corr/abs-2112-02852}
Yaosheng Xu, Dailin Hu, Litian Liang, Stephen McAleer, Pieter Abbeel, and Roy
  Fox.
\newblock Target entropy annealing for discrete soft actor-critic.
\newblock \emph{CoRR}, abs/2112.02852, 2021.
\newblock URL \url{https://arxiv.org/abs/2112.02852}.

\bibitem[Yang et~al.(2019)Yang, Li, and Zhang]{DBLP:conf/nips/YangLZ19}
Wenhao Yang, Xiang Li, and Zhihua Zhang.
\newblock A regularized approach to sparse optimal policy in reinforcement
  learning.
\newblock In Hanna~M. Wallach, Hugo Larochelle, Alina Beygelzimer, Florence
  d'Alch{\'{e}}{-}Buc, Emily~B. Fox, and Roman Garnett, editors, \emph{Advances
  in Neural Information Processing Systems 32: Annual Conference on Neural
  Information Processing Systems 2019, NeurIPS 2019, December 8-14, 2019,
  Vancouver, BC, Canada}, pages 5938--5948, 2019.
\newblock URL
  \url{https://proceedings.neurips.cc/paper/2019/hash/3f4366aeb9c157cf9a30c90693eafc55-Abstract.html}.

\bibitem[Ziebart(2010)]{DBLP:phd/us/Ziebart18}
Brian~D. Ziebart.
\newblock \emph{Modeling Purposeful Adaptive Behavior with the Principle of
  Maximum Causal Entropy}.
\newblock PhD thesis, Carnegie Mellon University, {USA}, 2010.
\newblock URL \url{https://doi.org/10.1184/r1/6720692.v1}.

\end{thebibliography}
\end{document}